\newtheorem{theorem}{Theorem}[section]
\newtheorem{lemma}[theorem]{Lemma}
\newcommand{\aggrevate}{\textsc{AggreVaTe}\xspace}
\newcommand{\aggrevated}{\textsc{AggreVaTeD}\xspace}
\newcommand{\thor}{\textsc{THOR}\xspace}
\title{Truncated Horizon Policy Search: Combining Reinforcement Learning \& Imitation Learning}
\author{Wen Sun \\
Robotics Institute\\
Carnegie Mellon University\\
Pittsburgh, PA, USA\\
\texttt{wensun@cs.cmu.edu} \\
 \And J. Andrew Bagnell 
\\
Robotics Institute\\
Carnegie Mellon University\\
Pittsburgh, PA, USA \\
\texttt{dbagnell@cs.cmu.edu} \\
\And
Byron Boots\\
School of Interactive Computing \\
Georgia Institute of Technology \\
Atlanta, GA, USA \\
\texttt{bboots@cc.gatech.edu} \\
}
\begin{document}

\maketitle

\begin{abstract}
In this paper, we propose to combine imitation and reinforcement learning via the idea of reward shaping using an oracle. 
We study the effectiveness of the near-optimal cost-to-go oracle on the planning horizon and demonstrate that the cost-to-go oracle shortens the learner's planning horizon as function of its accuracy: a globally optimal oracle can shorten the planning horizon to one, leading to a one-step greedy Markov Decision Process which is much easier to optimize, while an oracle that is far away from the optimality requires planning over a longer horizon to achieve near-optimal performance. Hence our new insight bridges the gap and interpolates between imitation learning and reinforcement learning. Motivated by the above mentioned insights, we propose Truncated HORizon Policy Search (THOR), a method that focuses on searching for policies that maximize the total reshaped reward over a  finite planning horizon when the oracle is sub-optimal. 
We experimentally demonstrate that a gradient-based implementation of \thor  can achieve superior performance compared to  RL baselines and  IL baselines even when the oracle is sub-optimal.

\end{abstract}

\section{Introduction}
\label{sec:intro}

Reinforcement Learning (RL), equipped with modern deep learning techniques, has dramatically advanced the state-of-the-art in challenging sequential decision problems including high-dimensional robotics control tasks  as well as video and board games \citep{mnih2015human,silver2016mastering}. 
However, these approaches typically require a large amount of training data and computational resources to succeed. In response to these challenges, researchers have explored strategies for making RL more efficient by leveraging additional information to guide the learning process.  Imitation learning (IL) is one such approach. In IL, the learner can reference expert demonstrations~\citep{abbeel2004apprenticeship},  or can access a cost-to-go oracle~\citep{ross2014reinforcement}, providing additional information about the long-term effects of learner decisions. Through these strategies, imitation learning lowers sample complexity by reducing random global exploration. For example, \citet{sun2017deeply} shows that, with access to an optimal expert, imitation learning can exponentially lower sample complexity compared to pure RL approaches. Experimentally, researchers also have demonstrated sample efficiency by leveraging expert demonstrations by adding demonstrations into a replay buffer \citep{vevcerik2017leveraging,nair2017overcoming}, or mixing the policy gradient with a behavioral cloning-related gradient~\citep{rajeswaran2017learning}.

Although imitating experts \emph{can} speed up the learning process in RL tasks, the performance of the learned policies are generally limited to the performance of the expert, which is often sub-optimal in practice. Previous imitation learning approaches with strong theoretical guarantees such as Data Aggregation (DAgger) \citep{Ross2011_AISTATS} and Aggregation with Values (\aggrevate) \citep{ross2014reinforcement} can only guarantee a policy which performs as well as the expert policy or a one-step deviation improvement over the expert policy.\footnote{\aggrevate achieves one-step deviation improvement over the expert under the assumption that the policy class is rich enough.} Unfortunately, this implies that imitation learning with a sub-optimal expert will often return a sub-optimal policy. 
Ideally, we want the best of both IL and RL: we want to use the expert to quickly learn a reasonable policy by imitation, while also exploring how to improve upon the expert with RL. This would allow the learner to overcome the sample inefficiencies inherent in a pure RL strategy while also allowing the learner to eventually surpass a potentially sub-optimal expert. Combining RL and IL is, in fact, not new.  \citet{chang2015learning} attempted to combine IL and RL by  stochastically interleaving incremental RL and IL updates. By doing so, the learned policy will either perform as well as the expert policy--the property of IL \citep{ross2014reinforcement}, or eventually reach a local optimal policy--the property of policy iteration-based RL approaches. Although, when the expert policy is sub-optimal, the learned locally optimal policy could potentially perform better than the expert policy, it is still difficult to precisely quantify how much the learner can improve over the expert. 

In this work, we propose a novel way of combining IL and RL through the idea of \emph{Reward Shaping} \citep{ng1999policy}. Throughout our paper we use \emph{cost} instead of reward, and we refer to the concept of reward shaping with costs as \emph{cost shaping}. We assume access to a cost-to-go oracle that provides an estimate of expert cost-to-go during training. The key idea is that the cost-to-go oracle can serve as a potential function for cost shaping. For example, consider a task modeled by a \emph{Markov Decision Process} (MDP). Cost shaping with the cost-to-go oracle produces a new MDP with an optimal policy that is equivalent to the optimal policy of the original MDP~\citep{ng1999policy}. The idea of cost shaping naturally suggests a strategy for IL: pick a favourite RL algorithm and run it on the new MDP reshaped using expert's cost-to-go oracle. In fact, \citet{ng1999policy} demonstrated that running SARSA~\citep{sutton1998introduction} on an MDP reshaped with a potential function that approximates the optimal policy's value-to-go, is an effective strategy. 

We take this idea one step further and study the effectiveness of the cost shaping with the expert's cost-to-go oracle, with a focus on the setting where we only have an \emph{imperfect} estimator $\hat{V}^e$ of the cost-to-go of some expert policy $\pi^e$, i.e., $\hat{V}^e\neq V^*$, where $V^*$ is the optimal policy's cost-to-go in the original MDP. We show that cost shaping 
with the cost-to-go oracle shortens the learner's planning horizon as a function of the accuracy of the oracle $\hat{V}^e$ compared to $V^*$. 
Consider two extremes. On one hand, when  we reshape the cost of the original MDP with $V^*$ (i.e., $\hat{V}^e = V^*$), the reshaped MDP has an effective planning horizon of one: a policy that minimizes the one-step cost of the reshaped MDP is in fact the optimal policy (hence the optimal policy of the original MDP). 
On the other hand, when the cost-to-go oracle provides no information regarding $V^*$, we have no choice but simply optimize the reshaped MDP (or just the original MDP) using RL over the entire planning horizon. 

With the above insight, 
we propose the high-level strategy for combining IL and RL, which we name \emph{Truncated HORizon Policy Search with cost-to-go oracle} (THOR). The idea is to first shape the cost using the expert's cost-to-go oracle $\hat{V}^e$, and then truncate the planning horizon of the new MDP and search for a policy that optimizes over the truncated planning horizon. For discrete MDPs, we mathematically formulate this strategy and guarantee that we will find a policy that performs better than the expert with a gap that can be exactly quantified (which is missing in the previous work of \citet{chang2015learning}). In practice, we propose a gradient-based algorithm that is motivated from this insight. The practical algorithm allows us to leverage complex function approximators to represent policies and can be applied to continuous state and action spaces. We verify our approach on several MDPs with continuous state and action spaces and show that THOR can be much more sample efficient than strong RL baselines (we compared to \emph{Trust Region Policy Optimization with Generalized Advantage Estimation (TRPO-GAE)} \citep{schulman2015high}), and can learn a significantly better policy than \aggrevate (we compared to the policy gradient version of \aggrevate from \citep{sun2017deeply}) with access only to an imperfect cost-to-go oracle. 

\subsection{Related Work and Our Contribution}
Previous work has shown that truncating the planning horizon can result in a tradeoff between accuracy and computational complexity. \cite{farahmand2016truncated} proposed a model-based RL approach that focuses on a search for policies that  maximize a sum of $k$-step rewards with a termination value that approximates the optimal value-to-go. Their algorithm focuses on the model-based setting and the discrete state and action setting, as the algorithm needs to perform $k$-step value iteration to compute the policy.   Another use of the truncated planning horizon is to trade off bias and variance. When the oracle is an approximation of the value function of the agent's current policy, by using k-step rollouts bottomed up by the oracle's return, truncating the planning horizon trades off  bias and variance of the estimated reward-to-go.  The bias-variance tradeoff has been extensively studied in Temporal Difference Learning literature \citep{Sutton1998} and policy iteration literature as well \citep{gabillon2011classification}.  \citet{ng2003shaping} is perhaps the closest to our work. 
In  Theorem 5 in the Appendix of Ng's dissertation, Ng considers the setting where the potential function for reward shaping is close to the optimal value function and suggests that if one performs reward shaping with the potential function, then one can decrease the discount factor of the original MDP 
without losing the optimality that much. Although in this work we consider truncating the planning steps directly,  Theorem 5 in Ng's dissertation and our work both essentially considers trading off between the hardness of the reshaped MDP (the shorter the planning horizon, the easier the MDP to optimize) and optimality of the learned policy. In addition to this tradeoff, our work suggests a path toward understanding previous imitation learning approaches through reward shaping, and tries to unify IL and RL by varying the planning horizon from $1$ to infinity, based on how close the expert oracle is to the optimal value function. Another contribution of our work is a lower bound analysis that shows that performance limitation of \aggrevate with an imperfect oracle, which is missing in previous work \citep{ross2014reinforcement}. The last contribution of our work is a model-free, actor-critic style algorithm that can be used for continuous state and action spaces.

\section{Preliminaries}
\label{sec:pre}
We consider the problem of optimizing Markov Decision Process defined as $\mathcal{M}_0 = (\mathcal{S}, \mathcal{A}, P, C,  \gamma)$. Here, $\mathcal{S}$ is a set of $S$ states and $\mathcal{A}$ is a set of $A$ actions; $P$ is the transition dynamics at such that for any $s\in\mathcal{S},s'\in\mathcal{S}, a\in\mathcal{A}$, $P(s'|s, a)$ is the probability of transitioning to state $s'$ from state $s$ by taking action $a$. For notation simplicity, in the rest of the paper, we will use short notation $P_{sa}$ to represent the distribution $P(\cdot|s,a)$. The cost for a given pair of $s$ and $a$ is $c(s,a)$, which is sampled from the cost distribution $C(s,a)$ with mean value $\bar{c}(s,a)$.  
A stationary stochastic policy $\pi(a|s)$ computes the probability of generating action $a$ given state $s$. 

The value function $V^{\pi}_{\mathcal{M}_0}$ and the state action cost-to-go $Q^{\pi}_{\mathcal{M}_0,h}(s,a)$ of $\pi$ on $\mathcal{M}_0$ are defined as:
\begin{align}
V^{\pi}_{\mathcal{M}_0}(s) = \mathbb{E}\big[\sum_{t=0}^{\infty} \gamma^t c(s_t,a_t) | s_0 = s, a\sim \pi\big], \;\;\; Q^{\pi}_{\mathcal{M}_0}(s,a) = \mathbb{E}\big[c(s,a) + \gamma\mathbb{E}_{s'\sim P_{sa}}[V^{\pi}_{\mathcal{M}_0}(s')]\big], \nonumber
\end{align} where the expectation is taken with respect to the randomness of $\mathcal{M}_0$ and the stochastic policy $\pi$. With $V^{\pi}_{\mathcal{M}_0}$ and $Q^{\pi}_{\mathcal{M}_0}$, we define the disadvantage\footnote{We call $A^{\pi}$ as the \emph{disadvantage} function as we are working in the cost setting.} function $A^{\pi}_{\mathcal{M}_0}(s,a) = Q^{\pi}_{\mathcal{M}_0}(s,a) - V^{\pi}_{\mathcal{M}_0}(s)$.
The objective is to search for the optimal policy $\pi^*$ such that $\pi^* = \arg\min_{\pi} V^{\pi}(s), \forall s\in\mathcal{S}$.


 
Throughout this work, we assume access to an cost-to-go oracle $\hat{V}^e(s): \mathcal{S}\to\mathbb{R}$. 
Note that we do not require $\hat{V}^e(s)$ to be equal to $V^*_{\mathcal{M}_0}$. For example, $\hat{V}^e(s)$ could be obtained by learning from trajectories demonstrated by the expert $\pi^e$ (e.g., Temporal Difference Learning \citep{sutton1998introduction}), or $\hat{V}^e$ could be computed by near-optimal search algorithms via access to ground truth information \citep{daume2009search,chang2015learning,sun2016learning} or via access to a simulator using Dynamic Programming (DP) techniques \citep{choudhury2017learning,pan2017agile}. In our experiment, we focus on the setting where we learn a $\hat{V}^e(s)$ using TD methods from a set of expert demonstrations. 


\subsection{Cost Shaping}
Given the original MDP $\mathcal{M}_0$ and any potential functions $\Phi:\mathcal{S}\to \mathbb{R}$, we can reshape the cost $c(s,a)$ sampled from $C(s,a)$ to be:
\begin{align}
\label{eq:cost_shaping}
    c'(s,a) = c(s,a) +\gamma \Phi(s') - \Phi(s), \;\;\; s'\sim P_{sa}.
\end{align} Denote the new MDP $\mathcal{M}$ as the MDP obtained by replacing $c$ by $c'$ in $\mathcal{M}_0$: $\mathcal{M} = (\mathcal{S},\mathcal{A}, P, c',\gamma)$. \cite{ng1999policy} showed that the optimal policy $\pi^*_{\mathcal{M}}$ on $\mathcal{M}$ and the optimal policy $\pi^*_{\mathcal{M}_0}$ on the original MDP  are the same: $\pi^*_{\mathcal{M}}(s) = \pi^*_{\mathcal{M}_0}(s), \forall s$. In other words, if we can successfully find $\pi^*_{\mathcal{M}}$ on $\mathcal{M}$, then we also find $\pi^*_{\mathcal{M}_0}$, the optimal policy on the original MDP $\mathcal{M}_0$ that we ultimately want to optimize.

\subsection{Imitation Learning}
\label{sec:il}
In IL, when given a cost-to-go oracle $V^e$, we can use it as a potential function for cost shaping. Specifically let us define the disadvantage $A^e(s,a) = c(s,a) + \gamma\mathbb{E}_{s'\sim P_{sa}}[V^e(s')] - V^e(s)$. As cost shaping does not change the optimal policy, we can rephrase the original policy search problem using the shaped cost:
\begin{align}
\label{eq:policy_search_shaped}
    \pi^* = \arg\min_{\pi} \mathbb{E}[\sum_{t=0}^{\infty}\gamma^t A^e(s_t,a_t)|s_0=s, a\sim \pi], 
\end{align} for all $s\in \mathcal{S}$. Though Eq.~\ref{eq:policy_search_shaped} provides an alternative objective for policy search, it could be as hard as the original problem as $\mathbb{E}[\sum_t\gamma^t A^e(s_t,a_t)]$ is just equal to $\mathbb{E}[\sum_{t}\gamma^t c(s_t,a_t) - V^e(s_0)]$, which can be easily verified using the definition of cost shaping and a telescoping sum trick. 

As directly optimizing Eq~\ref{eq:policy_search_shaped} is as difficult as policy search in the original MDP, previous IL algorithms such as \aggrevate essentially ignore temporal correlations between states and actions along the planning horizon and directly perform a policy iteration over the expert policy at every state, i.e., they are greedy with respect to $A^e$ as $\hat{\pi}(s) = \arg\min_a A^e(s,a), \forall s\in\mathcal{S}$. The policy iteration theorem guarantees that such a greedy policy $\hat{\pi}$ performs at least as well as the expert. Hence, when the expert is optimal, the greedy policy $\hat{\pi}$ is guaranteed to be optimal. However when $V^e$ is not the optimal value function, the greedy policy $\hat{\pi}$ over $A^e$ is a one-step deviation improvement over the expert but is not guaranteed to be close to the optimal $\pi^*$. We analyze in detail how poor the policy resulting from such a greedy policy improvement method could be when $V^e$ is far away from the optimal value function in Sec.~\ref{sec:theory}. 
\section{Effectiveness of Cost-to-go oracle on Planning Horizon}
\label{sec:theory}
In this section we study the dependency of effective planning horizon on the cost-to-go oracle. 
We focus on the setting where we have access to an oracle $\hat{V}^e(s)$ which approximates the cost-to-go of some expert policy $\pi^e$ (e.g., $V^e$ could be designed by domain knowledge \citep{ng1999policy} or learned from a set of expert demonstrations). 
We assume the oracle is close to $V^*_{\mathcal{M}_0}$, but imperfect: $|\hat{V}^e - V^*_{\mathcal{M}_0}| = \epsilon$ for some $\epsilon \in \mathbb{R}^+$. We first show that with such an imperfect oracle, previous IL algorithms \aggrevate  and \aggrevate D \citep{ross2014reinforcement,sun2017deeply} are only guaranteed to learn a policy that is $\gamma\epsilon/(1-\gamma)$ away from the optimal.  Let us define the expected total cost for any policy $\pi$ as $J(\pi) =  \mathbb{E}_{s_0\sim v}\left[V_{\mathcal{M}_0}^{\pi}(s_0)\right]$, measured under some initial state distribution $v$ and the original MDP $\mathcal{M}_0$.

\begin{theorem}
\label{them:lower_bound}
There exists an MDP and an imperfect oracle $\hat{V}^e(s)$ with $|\hat{V}^e(s) - V^*_{\mathcal{M}_0,h}(s)| = \epsilon$, such that the performance of the induced policy from the cost-to-go oracle $\hat{\pi}^*=\arg\min_a \left[c(s,a) + \gamma\mathbb{E}_{s'\sim P_{sa}}[\hat{V}^e(s')]\right]$ is at least $\Omega(\gamma\epsilon/(1-\gamma))$ away from the optimal policy $\pi^*$:
\begin{align}
    J(\hat{\pi}^*) - J({\pi}^*) \geq \Omega\left(\frac{\gamma}{1-\gamma}\epsilon\right).
\end{align}
\end{theorem}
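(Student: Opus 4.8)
The plan is to construct an explicit small MDP together with an imperfect oracle $\hat V^e$ realizing the bound, rather than arguing abstractly. Since the claimed gap is $\Omega(\gamma\epsilon/(1-\gamma))$, the construction must force the greedy-on-shaped-cost policy $\hat\pi^*$ into a state (or a cycle of states) whose continuation cost under the true dynamics is about $\gamma\epsilon/(1-\gamma)$ worse than optimal, and the perturbation $\hat V^e = V^*_{\mathcal M_0} + (\text{error of size }\epsilon)$ must be arranged so that the one-step look-ahead $c(s,a)+\gamma\mathbb E_{s'\sim P_{sa}}[\hat V^e(s')]$ ranks the wrong action first at the critical state. The cleanest way to do this: build a two-action ``fork'' state $s_0$ where action $a_{\text{good}}$ leads (deterministically) to a low-cost absorbing region and action $a_{\text{bad}}$ leads to a high-cost absorbing region; choose costs so the true gap between the two continuations is slightly less than $2\gamma\epsilon$; then set $\hat V^e$ to underestimate the value of the bad successor by $\epsilon$ and overestimate the value of the good successor by $\epsilon$, so the $2\gamma\epsilon$ swing in the shaped one-step cost flips the argmin toward $a_{\text{bad}}$ while keeping $\|\hat V^e - V^*_{\mathcal M_0}\|_\infty = \epsilon$.

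The key steps, in order, are: (i) specify $\mathcal S$, $\mathcal A$, $P$, $\bar c$ — I would use a constant-per-state absorbing ``good'' chain with per-step cost $0$ and a ``bad'' chain with per-step cost roughly $2(1-\gamma)\epsilon$ (or a single bad absorbing state with self-loop), so that $V^*$ on the bad region is $\approx 2\gamma\epsilon/(1-\gamma)$ larger than on the good region once discounted back to $s_0$; (ii) compute $V^*_{\mathcal M_0}$ exactly on this MDP (it is just geometric sums because everything is deterministic and absorbing); (iii) define $\hat V^e$ by adding $+\epsilon$ on the good successor state and $-\epsilon$ on the bad successor state (and $0$, or anything within $\epsilon$, elsewhere), and verify $|\hat V^e(s)-V^*_{\mathcal M_0}(s)|=\epsilon$ exactly at those states and $\le\epsilon$ elsewhere, matching the hypothesis; (iv) evaluate $\hat\pi^*(s_0)=\arg\min_a[\bar c(s_0,a)+\gamma\mathbb E[\hat V^e(s')]]$ and show the $2\gamma\epsilon$ bias beats the $<2\gamma\epsilon$ true cost advantage, so $\hat\pi^*$ picks $a_{\text{bad}}$; (v) compute $J(\hat\pi^*)-J(\pi^*)$ with initial distribution $v$ concentrated on $s_0$ (or reachable-to-$s_0$), getting $\Theta(\gamma\epsilon/(1-\gamma))$.

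The main obstacle is the bookkeeping in step (iii)–(iv): I must make the oracle error \emph{exactly} $\epsilon$ in the sup-norm (the theorem states equality, $|\hat V^e - V^*_{\mathcal M_0}| = \epsilon$, apparently as an $\ell_\infty$ statement) while simultaneously (a) keeping the flipped argmin strict and (b) keeping $\hat V^e$ internally consistent enough that it is plausibly the value function of \emph{some} expert policy $\pi^e$ on $\mathcal M_0$ — or, if the theorem only needs $\hat V^e$ to be an arbitrary function within $\epsilon$ of $V^*$, then (b) is moot and only (a) matters. A secondary subtlety is the direction of the perturbation: because we are minimizing cost, the oracle must \emph{under}-value the bad state and \emph{over}-value the good state, which is the opposite sign convention from the usual reward-maximization intuition, so I will keep the disadvantage-function sign conventions from Section~\ref{sec:il} front and center. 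Everything else — the geometric-sum evaluations of $V^*$ and $J$, and the constant hidden in $\Omega(\cdot)$ — is routine once the fork gadget and the sign of the perturbation are pinned down.
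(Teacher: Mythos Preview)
Your single-fork construction cannot deliver the $\Omega(\gamma\epsilon/(1-\gamma))$ gap; it tops out at $O(\gamma\epsilon)$. The reason is structural: with $\|\hat V^e - V^*\|_\infty = \epsilon$ you have $|\hat Q^e(s,a) - Q^*(s,a)| \le \gamma\epsilon$ for every $(s,a)$, so the greedy policy $\hat\pi^*$ can only pick an action whose true $Q^*$-value is at most $2\gamma\epsilon$ worse than optimal at any given state. In your gadget, once the agent leaves $s_0$ it enters an \emph{absorbing} region where there is nothing left to decide, so $V^{\hat\pi^*}$ and $V^*$ coincide from the successor onward and the entire regret equals $Q^*(s_0,a_{\text{bad}}) - Q^*(s_0,a_{\text{good}}) \le 2\gamma\epsilon$. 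Your own step (iv) already forces this: you need the true advantage of $a_{\text{good}}$ to be strictly less than $2\gamma\epsilon$ in order for the $\epsilon$-perturbation to flip the argmin, and that same advantage is exactly the regret you collect in step (v). The claim in step (i) that the discounted gap is $\approx 2\gamma\epsilon/(1-\gamma)$ is inconsistent with the per-step cost $2(1-\gamma)\epsilon$ you wrote down; with that cost the gap is $2\gamma\epsilon$, and if you instead make the gap $\Theta(\gamma\epsilon/(1-\gamma))$ then no $\epsilon$-bounded oracle error can flip the argmin at $s_0$.

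The missing idea is that the $1/(1-\gamma)$ factor must come from \emph{compounding} the $O(\gamma\epsilon)$ per-step mistake over the whole horizon, not from a single fork. The paper's construction does precisely this: it builds two parallel infinite chains, a ``good'' chain with per-step cost $0$ and a ``bad'' chain with per-step cost $1$, and at \emph{every} state each action deterministically moves to the next state on one chain or the other. Thus $V^*$ is $0$ on the good chain and $1$ on the bad chain (since from a bad state the optimal policy pays $1$ once and crosses back). The oracle is set to $\hat V^e = 0.5+\delta$ on good states and $0.5-\delta$ on bad states, so $\epsilon = 0.5+\delta$ and the induced $\hat Q^e$ reverses the action ranking at \emph{every} state. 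Consequently $\hat\pi^*$ enters and then stays on the bad chain forever, incurring $J(\hat\pi^*) = \gamma/(1-\gamma)$ while $J(\pi^*)=0$; letting $\delta\to 0^+$ gives $J(\hat\pi^*) - J(\pi^*) \to 2\gamma\epsilon/(1-\gamma)$. To repair your proposal, replace the two absorbing regions by this two-chain-with-crossover structure (or equivalently make $s_0$ recur after every step), so that the greedy policy faces the same flipped decision at every time step.
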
  
The proof with the constructed example can be found in Appendix~\ref{sec:proof_lower_bound}. Denote $\hat{Q}^e(s,a) = c(s,a) + \gamma \mathbb{E}_{s'}[\hat{V}^e(s')]$, in high level, we construct an example where $\hat{Q}^e$ is close to $Q^*$ in terms of $\|\hat{Q}^e - Q^*\|_{\infty}$, but the order of the actions induced by $\hat{Q}^e$ is different from the order of the actions from $Q^*$, hence forcing the induced policy $\hat{\pi}^*$ to make mistakes. 

As \aggrevate  at best computes a policy that is one-step improvement over the oracle, i.e., $\hat{\pi}^*=\arg\min_a \left[c(s,a) + \gamma\mathbb{E}_{s'\sim P_{sa}}[\hat{V}^e(s')]\right]$, it eventually has to suffer from the above lower bound.  This $\epsilon$ gap in fact is not surprising as \aggrevate  is a \emph{one-step greedy} algorithm in a sense that it is only optimizing the one-step cost function $c'$ from the \emph{reshaped} MDP $\mathcal{M}$. To see this, note that the cost of the reshaped MDP $\mathcal{M}$ is $\mathbb{E}[c'(s,a)] = [c(s,a) + \gamma\mathbb{E}_{s'\sim P_{sa}} \hat{V}^e(s') - \hat{V}^e(s)]$, and we have $\hat{\pi}^*(s) = \arg\min_{a} \mathbb{E}[c'(s,a)]$. Hence \aggrevate  can be regarded as a special algorithm that aims to optimizing the one-step cost of MDP $\mathcal{M}$ that is reshaped from the original MDP $\mathcal{M}_0$ using the cost-to-go oracle. 

Though when the cost-to-go oracle is imperfect, \aggrevate  will suffer from the above lower bound due to being greedy, when the cost-to-go oracle is perfect, i.e., $\hat{V}^e = V^*$, being greedy on one-step cost makes perfect sense. To see this, use the property of the cost shaping \citep{ng1999policy}, we can verify that when $\hat{V}^e = V^*$:
\begin{align}
\label{eq:property_1}
{V}_{\mathcal{M}}^*(s) = 0, 
\;\;\;\; {\pi}^*_{\mathcal{M}}(s) = \arg\min_{a} \mathbb{E} [{c}'(s,a)], \;\;\;\; \forall s\in\mathcal{S}.
\end{align} 
Namely the optimal policy on the reshaped MDP $\mathcal{M}$ only optimizes the one-step cost, which indicates that the optimal cost-to-go oracle shortens the planning horizon to one: finding the optimal policy on $\mathcal{M}_0$ becomes equivalent to optimizing the immediate cost function on $\mathcal{M}$ at every state $s$.

When the cost-to-go oracle is $\epsilon$ away from the optimality, we lose the one-step greedy property shown in Eq.~\ref{eq:property_1}. In the next section, we show that how we can break the lower bound $\Omega(\epsilon/(1-\gamma))$ only with access to an imperfect cost-to-go oracle $\hat{V}^e$, by being less greedy and looking head for more than one-step.



\subsection{Outperforming the Expert}
Given the reshaped MDP $\mathcal{M}$ with $\hat{V}^e$ as the potential function, as we mentioned in Sec.~\ref{sec:il}, directly optimizing Eq.~\ref{eq:policy_search_shaped} is as difficult as the original policy search problem, we instead propose to minimize the total cost of a policy $\pi$ over a finite $k\geq 1$ steps at any state $s\in\mathcal{S}$:
\begin{align}
\label{eq:in_reshape_MDP}
    \mathbb{E}\Big[\sum_{i=1}^k \gamma^{i-1} c'(s_i,a_i) | s_1 = s; a\sim\pi\Big].
\end{align} Using the definition of cost shaping and telescoping sum trick,
we can re-write Eq.~\ref{eq:in_reshape_MDP} in the following format, which we define as $k$-step disadvantage with respect to the cost-to-go oracle: 
\begin{align}
\label{eq:ideal}
    \mathbb{E}\Big[\sum_{i=1}^k\gamma^{i-1} c(s_i,a_i) + \gamma^{k}\hat{V}^e(s_{k+1}) - \hat{V}^e(s_1)|s_1=s; a\sim\pi\Big], \forall s\in \mathcal{S}. 
\end{align}
We assume that our policy class $\Pi$ is rich enough that there always exists a policy $\hat{\pi}^*\in\Pi$ that can simultaneously minimizes the $k-$step disadvantage at every state (e.g., policies in tabular representation in discrete MDPs). 
Note that when $k=1$, minimizing Eq.~\ref{eq:ideal} becomes the problem of finding a policy that minimizes the  disadvantage $A^e_{\mathcal{M}_0}(s,a)$ with respect to the expert and reveals \aggrevate. 

The following theorem shows that to outperform expert, we can  optimize Eq.~\ref{eq:ideal} with $k>1$.  Let us denote the policy that minimizes Eq.~\ref{eq:ideal} in every state as $\hat{\pi}^*$, and the value function of $\hat{\pi}^*$ as $V^{\hat{\pi}^*}$. 

\begin{theorem}
\label{them:ub}
Assume $\hat{\pi}^*$ minimizes Eq.~\ref{eq:ideal} for every state $s\in\mathcal{S}$ with $k > 1$ and $|\hat{V}^e(s) - V^*(s)|=\Theta(\epsilon),\forall s$. We have :
\begin{align}
    J(\hat{\pi}^*) - J(\pi^*) \leq O\left(\frac{\gamma^k}{1-\gamma^k}\epsilon\right)
\end{align}
\end{theorem}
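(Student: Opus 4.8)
The plan is to run a self-bounding (fixed-point) argument on the quantity $\delta := \|V^{\hat{\pi}^*} - V^*\|_\infty$, where $V^* = V^{\pi^*}_{\mathcal{M}_0}$. First I would introduce the $k$-step truncated value function that uses the oracle as a terminal cost: for any policy $\pi$, set
\[
V^\pi_k(s) = \mathbb{E}\Big[\textstyle\sum_{i=1}^k \gamma^{i-1} c(s_i,a_i) + \gamma^k \hat{V}^e(s_{k+1}) \,\Big|\, s_1 = s;\, a\sim\pi\Big].
\]
Since $\hat{\pi}^*$ minimizes Eq.~\ref{eq:ideal} at every state and the $-\hat{V}^e(s_1)$ term is a constant given $s_1=s$ that cancels on both sides, we immediately get $V^{\hat{\pi}^*}_k(s) \le V^{\pi^*}_k(s)$ for all $s\in\mathcal{S}$.

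Next I would relate the truncated value to the true infinite-horizon value. Splitting the infinite discounted sum at step $k$ and using the Markov property, for any $\pi$ one has $V^\pi(s) = \mathbb{E}_\pi[\sum_{i=1}^k \gamma^{i-1}c(s_i,a_i) + \gamma^k V^\pi(s_{k+1})\mid s_1=s]$, hence the telescoping identity
\[
V^\pi(s) - V^\pi_k(s) = \gamma^k\, \mathbb{E}_\pi\big[V^\pi(s_{k+1}) - \hat{V}^e(s_{k+1}) \,\big|\, s_1 = s\big].
\]
Applying this to $\pi=\hat{\pi}^*$ and to $\pi=\pi^*$ and chaining with $V^{\hat{\pi}^*}_k(s)\le V^{\pi^*}_k(s)$ gives, for every $s$,
\[
V^{\hat{\pi}^*}(s) - V^*(s) \le \gamma^k\, \mathbb{E}_{\hat{\pi}^*}\big[V^{\hat{\pi}^*}(s_{k+1}) - \hat{V}^e(s_{k+1})\big] \;-\; \gamma^k\, \mathbb{E}_{\pi^*}\big[V^*(s_{k+1}) - \hat{V}^e(s_{k+1})\big].
\]

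Then I would bound the right-hand side. The key point is that $|V^{\hat{\pi}^*} - \hat{V}^e|$ is \emph{not} controlled by $\epsilon$ — only $|V^* - \hat{V}^e| = O(\epsilon)$ is — so I split $V^{\hat{\pi}^*}(s_{k+1}) - \hat{V}^e(s_{k+1}) = (V^{\hat{\pi}^*}(s_{k+1}) - V^*(s_{k+1})) + (V^*(s_{k+1}) - \hat{V}^e(s_{k+1}))$, bound the first bracket by $\delta$ and the second by $O(\epsilon)$, and bound $|\mathbb{E}_{\pi^*}[V^* - \hat{V}^e]|$ by $O(\epsilon)$. This yields $V^{\hat{\pi}^*}(s) - V^*(s) \le \gamma^k\delta + O(\gamma^k\epsilon)$ for every $s$. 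Since $V^{\hat{\pi}^*}\ge V^*$ pointwise by optimality of $\pi^*$, taking the supremum over $s$ turns this into the self-referential inequality $\delta \le \gamma^k\delta + O(\gamma^k\epsilon)$, i.e.\ $\delta \le O\!\big(\gamma^k\epsilon/(1-\gamma^k)\big)$. Finally $J(\hat{\pi}^*) - J(\pi^*) = \mathbb{E}_{s_0\sim v}[V^{\hat{\pi}^*}(s_0) - V^*(s_0)] \le \delta$, which is the claim.

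The main obstacle is precisely the step where $\delta$ appears on both sides: one must recognize that the naive ``$|V^{\hat{\pi}^*}-\hat{V}^e|\le\epsilon$'' bound is unavailable because the oracle is only close to $V^*$, not to $V^{\hat{\pi}^*}$; then introduce $\delta=\|V^{\hat{\pi}^*}-V^*\|_\infty$ as an unknown, keep the inequality directions consistent (using $V^{\hat{\pi}^*}\ge V^*$ so $\delta$ is a supremum of nonnegative terms), and close the loop using the contraction factor $\gamma^k$. Everything else — the telescoping definition of $V^\pi_k$, the Markov split at step $k$, and the triangle-inequality bookkeeping — is routine.
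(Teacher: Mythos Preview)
Your proof is correct and follows essentially the same route as the paper's: both split $V^{\hat{\pi}^*}(s)-V^*(s)$ via the $k$-step rollout, use that $\hat{\pi}^*$ minimizes the $k$-step objective against $\pi^*$, and replace $V^*$ by $\hat{V}^e$ at cost $\gamma^k\epsilon$ per side to obtain $V^{\hat{\pi}^*}(s)-V^*(s)\le \gamma^k\mathbb{E}_{\hat{\pi}^*}[V^{\hat{\pi}^*}(s_{k+1})-V^*(s_{k+1})]+2\gamma^k\epsilon$. The only cosmetic difference is that the paper iterates this inequality as a geometric series, whereas you take a sup to get the self-referential bound $\delta\le\gamma^k\delta+2\gamma^k\epsilon$ and solve; these are equivalent.
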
 
Compare the above theorem to the lower bound shown in Theorem~\ref{them:lower_bound}, we can see that when $k>1$, we are able to learn a policy that performs better than the policy induced by the oracle (i.e.,$\hat{\pi}^*(s) = \arg\min_a \hat{Q}^e(s,a)$) by at least $(\frac{\gamma}{1-\gamma} - \frac{\gamma^k}{1-\gamma^k})\epsilon$. The proof can be found in Appendix~\ref{sec:theorem_up_proof}.


Theorem~\ref{them:ub} and Theorem~\ref{them:lower_bound} together summarize that when the expert is imperfect, simply computing a policy that minimizes the one-step disadvantage (i.e., $(k=1)$) is not sufficient to guarantee near-optimal performance; however, optimizing a $k$-step  disadvantage with $k>1$ leads to a policy that guarantees to \emph{outperform the policy induced by the oracle} (i.e., the best possible policy that can be learnt using \aggrevate  and \aggrevated).
Also our theorem provides a concrete performance gap between the policy that optimizes Eq.~\ref{eq:ideal} for $k>1$ and the policy that induced by the oracle, which is missing in previous work (e.g., \citep{chang2015learning}).

As we already showed, if we set $k=1$, then optimizing Eq.~\ref{eq:ideal} becomes optimizing the disadvantage over the expert $A^e_{\mathcal{M}_0}$, which is exactly what \aggrevate  aims for. When we set $k=\infty$, optimizing Eq.~\ref{eq:ideal} or Eq.~\ref{eq:in_reshape_MDP} just becomes optimizing the total cost of the original MDP. Optimizing over a shorter finite horizon is  easier than optimizing over the entire infinite long horizon due to advantages such as smaller variance of the empirical estimation of the objective function, less temporal correlations between states and costs along a shorter trajectory. Hence our main theorem essentially provides a tradeoff between the optimality of the solution $\hat{\pi}^*$ and the difficulty of the underlying optimization problem.

\section{Practical Algorithm}
\label{sec:method}
\begin{algorithm}[tb]
\caption{\texttt{Truncated Horizon Policy Search (THOR)}}
 \label{alg:THOR}
\begin{algorithmic}[1]
  \STATE {\bfseries Input:} The original MDP $\mathcal{M}_0$. Truncation Step $k$. Oracle ${V}^e$. 
  \STATE Initialize policy $\pi_{\theta_0}$ with parameter $\theta_0$ and truncated advantage estimator $\hat{A}^{0,k}_{\mathcal{M}}$.
  \FOR {$\texttt{n} = 0,...$}
    \STATE Reset system. 
    \STATE Execute $\pi_{\theta_n}$ to generate a set of trajectories $\{\tau_i\}_{i=1}^N$. 
    \STATE Reshape cost ${c'}(s_t,a_t) = c(s_t,a_t) + {V}^e_{t+1}(s_{t+1}) - {V}^e_{t}(s_t)$, for every $t\in[1,|\tau_i|]$ in every trajectory $\tau_i, i\in [N]$.
    \STATE Compute  gradient:
    \begin{align}
    \sum_{\tau_i}\sum_{t}
     \nabla_{\theta}(\ln \pi_{\theta}(a_t|s_t))|_{\theta=\theta_n}\hat{A}^{\pi_n,k}_{\mathcal{M}}(s_t,a_t)
     \end{align}
    \STATE Update disadvantage estimator to $\hat{A}^{\pi_n,k}_{\mathcal{M}}$ using $\{\tau_i\}_i$ with reshaped cost $c'$. 
    \STATE Update policy parameter to $\theta_{n+1}$.    
  \ENDFOR
\end{algorithmic}
\end{algorithm}

Given the original MDP $\mathcal{M}_0$ and the cost-to-go oracle $\hat{V}^e$, the reshaped MDP's cost function $c'$ is obtained from Eq.~\ref{eq:cost_shaping} using the cost-to-go oracle as a potential function. 
Instead of directly applying RL algorithms on $\mathcal{M}_0$, we use the fact that the cost-to-go oracle shortens the effective planning horizon of $\mathcal{M}$, and propose \emph{THOR: Truncated HORizon Policy Search} summarized in Alg.~\ref{alg:THOR}. 
The general idea of THOR is that instead of searching for policies that optimize the total cost over the entire infinitely long horizon, we focus on searching for polices that minimizes the total cost over a truncated horizon, i.e., a $k-$step time window. Below we first show how we derive THOR from the insight we obtained in Sec.~\ref{sec:theory}.

Let us define a $k$-step truncated value function $V^{\pi,k}_{\mathcal{M}}$ and similar state action value function $Q^{\pi,k}_{\mathcal{M}}$ on the reshaped MDP $\mathcal{M}$ as: 
\begin{align}
&V^{\pi,k}_{\mathcal{M}}(s) = \mathbb{E}\Big[\sum_{t=1}^{k} \gamma^{t-1}c'(s_t,a_t)|s_1=s,a\sim\pi\Big], \nonumber\\
& Q^{\pi,k}_{\mathcal{M}}(s,a) = \mathbb{E}\Big[c'(s,a) + \sum_{i=1}^{k-1}\gamma^i c'(s_i,a_i) | s_i\sim P_{sa}, a_i\sim \pi(s_i)\Big],
\end{align} 
At any time state $s$,  $V^{\pi,k}_{\mathcal{M}}$ 
only considers (reshaped) cost signals $c'$ from a k-step time window. 

We are interested in searching for a policy that can optimizes the total cost over a finite k-step horizon as shown in Eq.~\ref{eq:in_reshape_MDP}.
For MDPs with large or continuous state spaces, we cannot afford to enumerate all states $s\in\mathcal{S}$ to find a policy that minimizes the $k-$step disadvantage function as in Eq.~\ref{eq:in_reshape_MDP}. Instead one can leverage the approximate policy iteration idea and  minimize the weighted cost over state space using a state distribution $\nu$ \citep{kakade2002approximately,bagnell2004policy}:
\begin{align}
\label{eq:explore}
    \min_{\pi\in\Pi} \mathbb{E}_{s_0\sim \nu}\left[ \mathbb{E}\Big[\sum_{i=1}^k\gamma^i c'(s_i,a_i) |a\sim \pi\Big] \right].
\end{align}  For parameterized policy $\pi$ (e.g., neural network policies), we can implement the minimization in Eq.~\ref{eq:explore} using  gradient-based update procedures (e.g., Stochastic Gradient Descent, Natural Gradient \citep{kakade2002natural,bagnell2003covariant}) in the policy's parameter space. 
In the setting where the system cannot be reset to any state, a typical choice of exploration policy is the currently learned policy (possibly mixed with a random process \citep{lillicrap2015continuous} to futher encourage exploration). Denote $\pi_n$ as the currently learned policy after iteration $n$ and $Pr_{\pi_n}(\cdot)$ as the average state distribution induced by executing $\pi_n$ (parameterized by $\theta_n$) on the MDP. Replacing the exploration distribution by $Pr_{\pi_n}(\cdot)$ in Eq.~\ref{eq:explore}, and taking the derivative with respect to the policy parameter $\theta$, the policy gradient is:
\begin{align}
   & \mathbb{E}_{s\sim Pr_{\pi_n}}\left[ \mathbb{E}_{\tau^k\sim \pi_n} [\sum_{i=1}^{k} \nabla_{\theta}\ln \pi(a_i|s_i;\theta)(\sum_{j=i}^{k+i} \gamma^{j-i} c'(s_j,a_j)) ]   \right] \nonumber\\
    & \approx \mathbb{E}_{s\sim Pr_{\pi_n}}\left[ \mathbb{E}_{\tau^k\sim \pi_n} [\sum_{i=1}^{k} \nabla_{\theta}\ln \pi(a_i|s_i;\theta)Q^{\pi,k}_{\mathcal{M}}(s_i,a_i)]   \right] \nonumber
\end{align} where $\tau^k\sim \pi_n$ denotes a partial $k-$step trajectory $\tau^k = \{s_1,a_1,..., s_k,a_k | s_1 = s\}$ sampled from executing $\pi_n$ on the MDP from state $s$. Replacing the expectation by empirical samples from $\pi_n$, replacing $Q^{\pi,k}_{\mathcal{M}}$ by a critic approximated by Generalized disadvantage Estimator (GAE) $\hat{A}^{\pi,k}_{\mathcal{M}}$ \citep{schulman2015high}, we get back to the gradient used in Alg.~\ref{alg:THOR}:
\begin{align}
  &\mathbb{E}_{s\sim Pr_{\pi_n}}\left[ \mathbb{E}_{\tau_k\sim \pi_n} [\sum_{i=1}^{k} \nabla_{\theta}\ln \pi(a_i|s_i;\theta)Q^{\pi,k}_{\mathcal{M}}(s_i,a_i)]   \right] \nonumber\\
  & \approx k\sum_{\tau}\Big(\sum_{t=1}^{|\tau|} \nabla_{\theta}\ln(\pi(a_t|s_t;\theta))\hat{A}^{\pi,k}_{\mathcal{M}}(s_t,a_t)\Big)/H,
\end{align} where $|\tau|$ denotes the length of the trajectory $\tau$.

\subsection{Interpretation using Truncated Back-Propagation Through Time}
If using the classic policy gradient formulation on the reshaped MDP $\mathcal{M}$ we should have the following expression, which is just a re-formulation of the classic policy gradient \citep{williams1992simple}: 
\begin{align}
\mathbb{E}_{\tau} \sum_{t=1}^{|\tau|} \big( c'_t \sum_{i=0}^{t-1}( \nabla_{\theta}\ln\pi(a_{t-i}|s_{t-i};\theta))\big), 
\end{align} which is true since the cost $c'_i$ (we denote $c'_i(s,a)$ as $c'_i$ for notation simplicity) at time step $i$ is correlated with the actions at time step  $t=i$ all the way back to the beginning $t=1$. In other words, in the policy gradient format, the effectiveness of the cost $c_t$ is \emph{back-propagated through time} all the way back the first step. Our proposed gradient formulation in Alg.~\ref{alg:THOR} shares a similar spirit of \emph{Truncated Back-Propagation Through Time} \citep{zipser1990subgrouping}, and can be regarded as a truncated version of the classic policy gradient formulation: at any time step $t$, the cost $c'$ is back-propagated through time at most $k$-steps:
\begin{align}
\label{eq:truncate}
    \mathbb{E}_{\tau} \sum_{t=1}^{|\tau|} \big(c_t'\sum_{i=0}^{k-1}(\nabla_{\theta}\ln\pi(a_{t-i}|s_{t-i};\theta))  \big), 
\end{align} 
In Eq.~\ref{eq:truncate}, for any time step $t$, we ignore the correlation between $c'_t$ and the actions that are executed $k$-step before $t$, hence elimiates long temporal correlations between costs and old actions.  
In fact, \aggrevate D \citep{sun2017deeply}, a policy gradient version of \aggrevate , sets $k=1$ and can be regarded as \emph{No Back-Propagation Through Time}. 

\subsection{Connection to IL and RL}
The above gradient formulation provides a natural half-way point between IL and RL. When $k=1$ and $\hat{V}^e = V^*_{\mathcal{M}_0}$ (the optimal value function in the original MDP $\mathcal{M}_0$):
\begin{align}
\label{eq:connect_to_IL}
 &\mathbb{E}_{\tau}\Big[\sum_{t=1}^{|\tau|} \nabla_{\theta}(\ln \pi_{\theta}(a_t|s_t))A^{\pi^e,1}_{\mathcal{M}}(s_t,a_t)\Big] = \mathbb{E}_{\tau}\Big[\sum_{t=1}^{|\tau|} \nabla_{\theta}(\ln \pi_{\theta}(a_t|s_t))Q^{\pi_e,1}_{\mathcal{M}}(s_t,a_t)\Big] \nonumber\\
 & =  \mathbb{E}_{\tau}\Big[\sum_{t=1}^{|\tau|}\nabla_{\theta}(\ln \pi_{\theta}(a_t|s_t))\mathbb{E}[c'(s_t,a_t)]\Big] =  \mathbb{E}_{\tau}\Big[\sum_{t=1}^{|\tau|}\nabla_{\theta}(\ln \pi_{\theta}(a_t|s_t))A^{\pi^*}_{\mathcal{M}_0}(s_t,a_t)\Big],
\end{align} where, for notation simplicity, we here use $\mathbb{E}_{\tau}$ to represent the expectation over trajectories sampled from executing policy $\pi_{\theta}$, and $A^{\pi^*}_{\mathcal{M}_0}$ is the advantage function on the original MDP $\mathcal{M}_0$. The fourth expression in the above equation is exactly the gradient proposed by \aggrevated \citep{sun2017deeply}. \aggrevated performs gradient descent with gradient in the format of the fourth expression in Eq.~\ref{eq:connect_to_IL} to discourage the log-likelihood of an action $a_t$ that has low advantage over $\pi^*$ at a given state $s_t$.

On the other hand, when we set $k = \infty$, i.e., no truncation on horizon, then we return back to the classic policy gradient on the  MDP $\mathcal{M}$ obtained from cost shaping with $\hat{V}^e$. As optimizing $\mathcal{M}$ is the same as optimizing the original MDP $\mathcal{M}_0$ \citep{ng1999policy}, our formulation is equivalent to a pure RL approach on $\mathcal{M}_0$. 
In the extreme case when the oracle $\hat{V}^e$ has nothing to do with the true optimal oracle $V^*$, as there is no useful information we can distill from the oracle and RL becomes the only approach to solve $\mathcal{M}_0$.


\section{Experiments}
\label{sec:exp}
We evaluated THOR on robotics simulators from OpenAI Gym \citep{1606.01540}. Throughout this section, we report reward instead of cost, since OpenAI Gym by default uses reward. The baseline we compare against is TRPO-GAE \citep{schulman2015high} and \aggrevated \citep{sun2017deeply}. 

To simulate oracles, we first train TRPO-GAE until convergence to obtain a policy as an expert $\pi^e$. We then collected a batch of trajectories by executing $\pi^e$. Finally, we use TD learning \cite{Sutton1998} to train a value function  $\hat{V}^e$ that approximates $V^e$. In all our experiments, we ignored $\pi^e$ and only used the pre-trained $\hat{V}^e$ for reward shaping. Hence our experimental setting simulates the situation where \emph{we only have a batch of expert demonstrations available}, and not the experts themselves. This is a much harder setting than the interactive setting considered in previous work \citep{Ross2011_AISTATS,sun2017deeply,chang2015learning}. Note that $\pi^e$ is not guaranteed to be an optimal policy, and $\hat{V}^e$ is only trained on the demonstrations from $\pi^e$, therefore the oracle $\hat{V}
^e$ is just a coarse estimator of $V^*_{\mathcal{M}_0}$.  Our goal is to show that, compared to \aggrevated, THOR with $k>1$ results in significantly better performance; compared to TRPO-GAE, THOR with some $k<<H$ converges faster and is more sample efficient. For fair comparison to RL approaches, we do not pre-train policy or critic $\hat{A}$ using demonstration data, though initialization using demonstration data is suggested in theory and has been used in practice to boost the performance \citep{Ross2011_AISTATS,bahdanau2016actor}.  

For all methods we report statistics (mean and standard deviation) from 25 seeds that are i.i.d generated. For trust region optimization on the actor $\pi_{\theta}$ and GAE on the critic, we simply use the recommended parameters in the code-base from TRPO-GAE \citep{schulman2015high}. We did not tune any parameters except the truncation length $k$.

\subsection{Discrete Action Control}

\begin{figure}[t!]
	\centering
	
	\begin{subfigure}[l]{0.4\textwidth}
        \includegraphics[width=1.11\textwidth,keepaspectratio]{./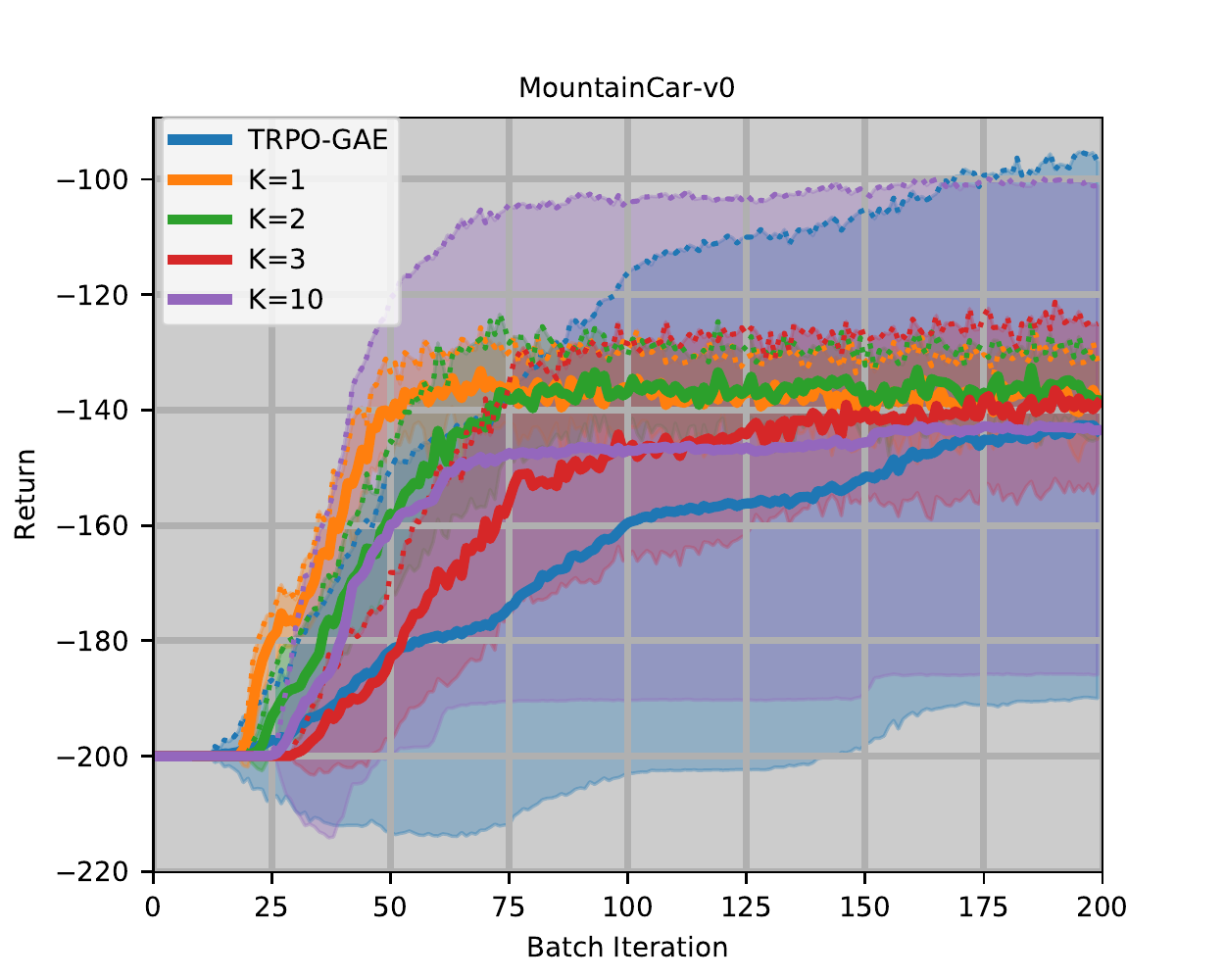}
        \caption{Mountain Car with H = 200}
        \label{fig:validation}
    \end{subfigure}
    \begin{subfigure}[l]{0.4\textwidth}
        \includegraphics[width=1.11\textwidth,keepaspectratio]{./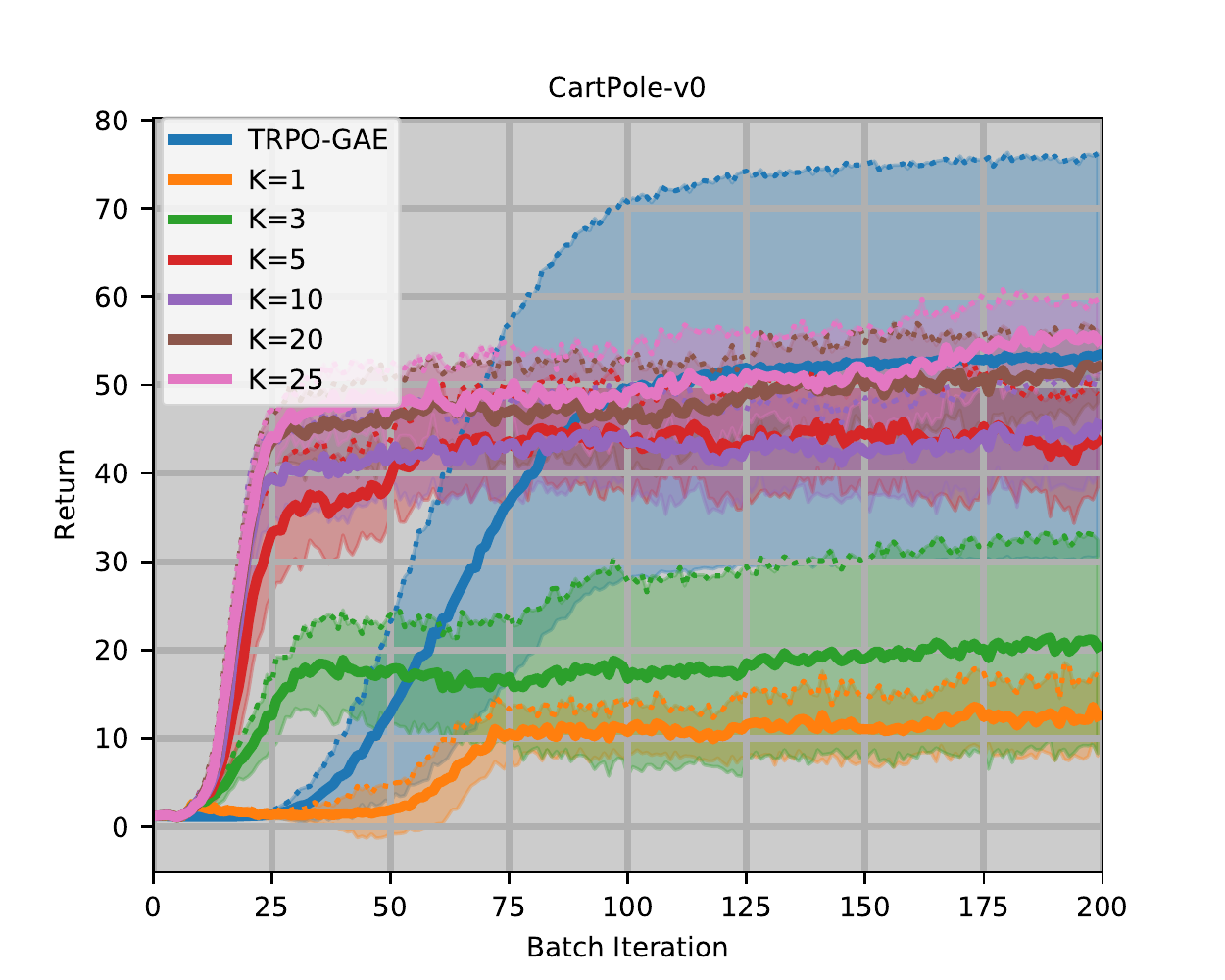}
        \caption{SR-CartPole with $H=200$}
        \label{fig:validation}
    \end{subfigure}
    
	\begin{subfigure}[l]{0.4\textwidth}
        \includegraphics[width=1.11\textwidth,keepaspectratio]{./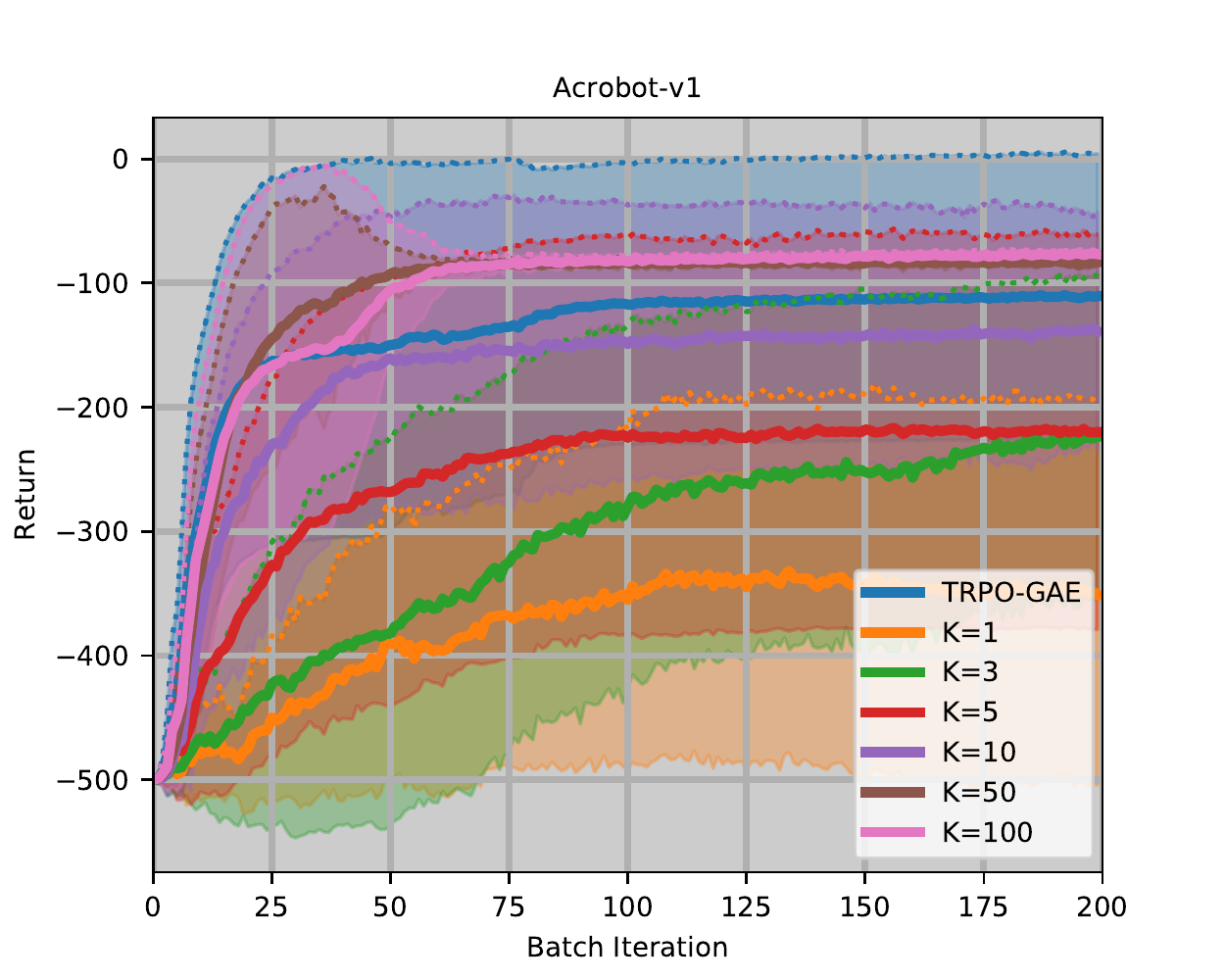}
        \caption{Acrobot with $H=500$}
        \label{fig:eager}
    \end{subfigure}
    \begin{subfigure}[l]{0.4\textwidth}
        \includegraphics[width=1.11\textwidth,keepaspectratio]{./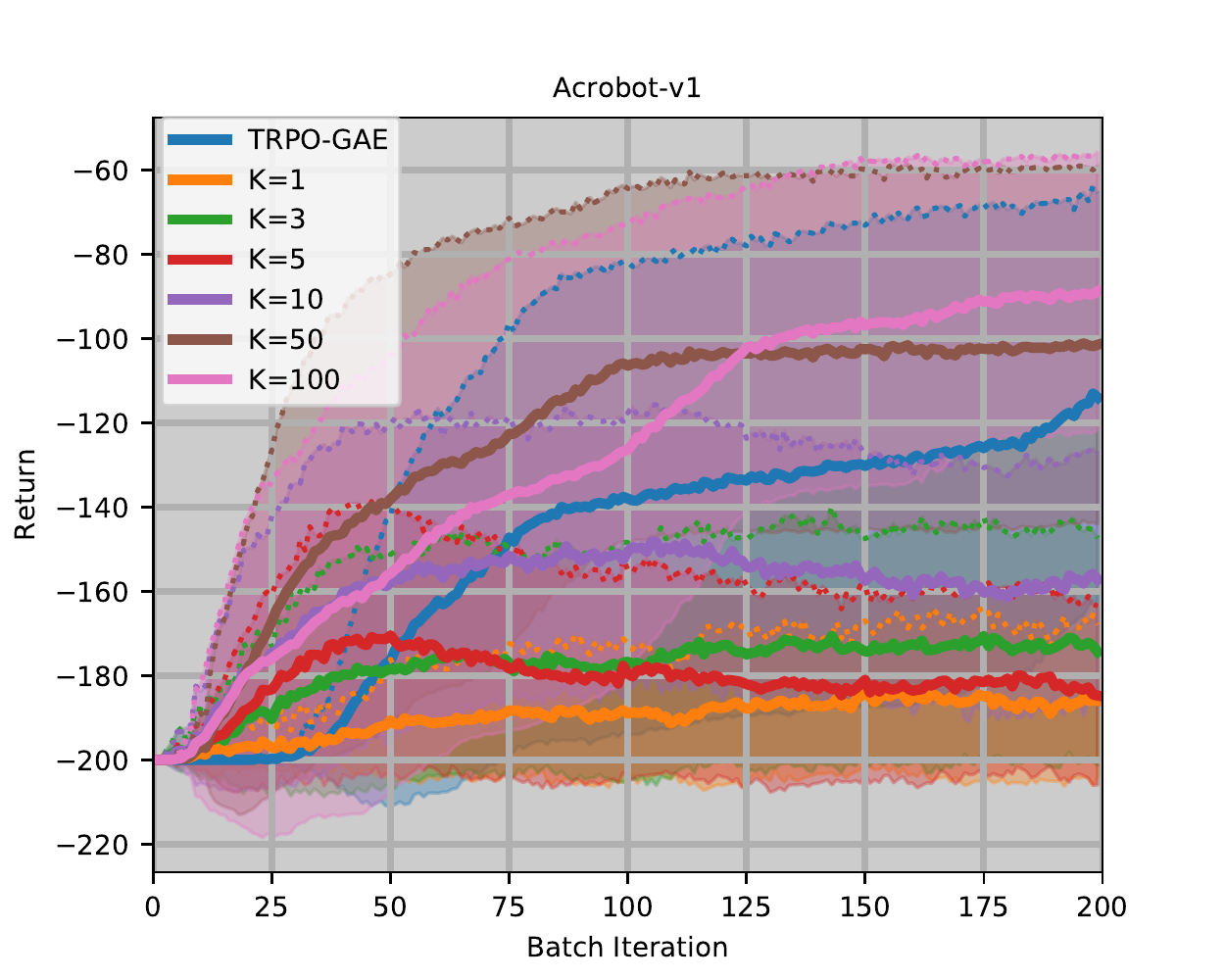}
        \caption{Acrobot with $H=200$}
        \label{fig:eager}
    \end{subfigure}

    \caption{Reward versus batch iterations of THOR with different $k$ and TRPO-GAE (blue) for Mountain car, Sparse Reward (SR) CartPole, and Acrobot with different horizon. Average rewards across 25 runs are shown in solid lines and averages + std are shown in dotted lines. }
    \label{fig:sparse_reward}
    
\end{figure}

We consider two discrete action control tasks with sparse rewards: Mountain-Car, Acrobot and a modified sparse reward version of CartPole. All simulations have sparse reward in the sense that no reward signal is given until the policy succeeds (e.g., Acrobot swings up). In these settings, pure RL approaches that rely on random exploration strategies, suffer from the reward sparsity. On the other hand, THOR can leverage oracle information for more efficient exploration. Results are shown in Fig.~\ref{fig:sparse_reward}. 

Note that in our setting where $\hat{V}^e$ is imperfect, THOR with $k>1$ works much better than \aggrevated (THOR with $k=1$) in Acrobot. In Mountain Car, we observe that \aggrevated achieves good performance in terms of the mean, but THOR with $k>1$ (especially $k=10$) results in much higher mean+std, which means that once THOR receives the reward signal, it can leverage this signal to perform better than the oracles. 

We also show that THOR with $k>1$ (but much smaller than $H$) can perform better than TRPO-GAE. In general, as $k$ increases, we get better performance. We make the acrobot setting even harder by setting $H=200$ to even reduce the chance of a random policy to receive reward signals. Compare Fig.~\ref{fig:sparse_reward} (c) to Fig.~\ref{fig:sparse_reward} (b), we can see that THOR with different settings of $k$ always learns faster than TRPO-GAE, and THOR with $k=50$ and $k=100$ significantly outperform TRPO-GAE in both mean and mean+std. This indicates that THOR can leverage both reward signals (to perform better than \aggrevated) and the oracles (to learn faster or even outperform TRPO). 

\subsection{Continuous Action Control}

\begin{figure}[t!]
	\centering
	\begin{subfigure}[l]{0.4\textwidth}
        \includegraphics[width=1.11\textwidth,keepaspectratio]{./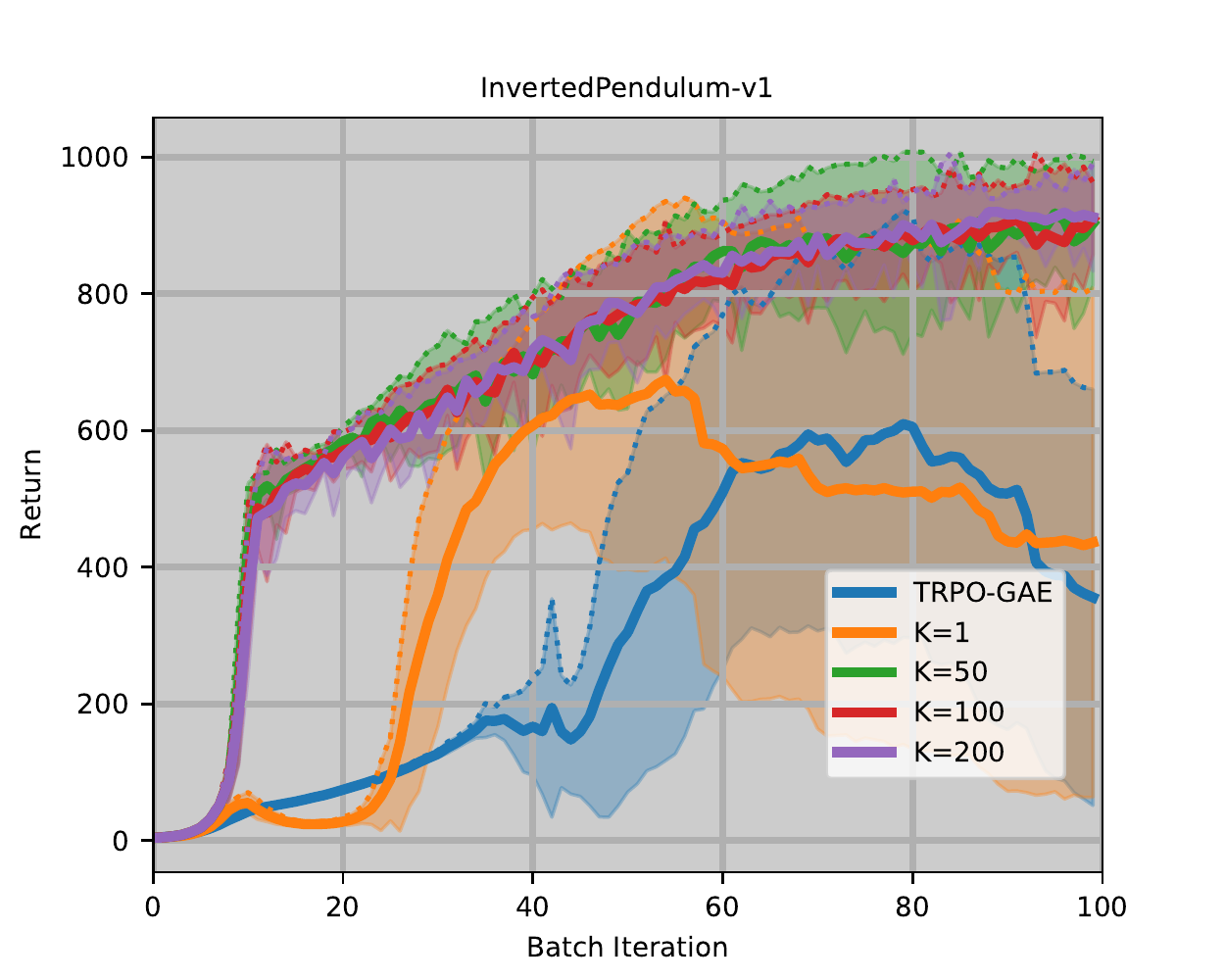}
        \caption{SR-Inverted Pendulum (H=1000)}
        \label{fig:inverted_pendulum}
    \end{subfigure}
    \begin{subfigure}[l]{0.42\textwidth}
        \includegraphics[width=1.12\textwidth,keepaspectratio]{./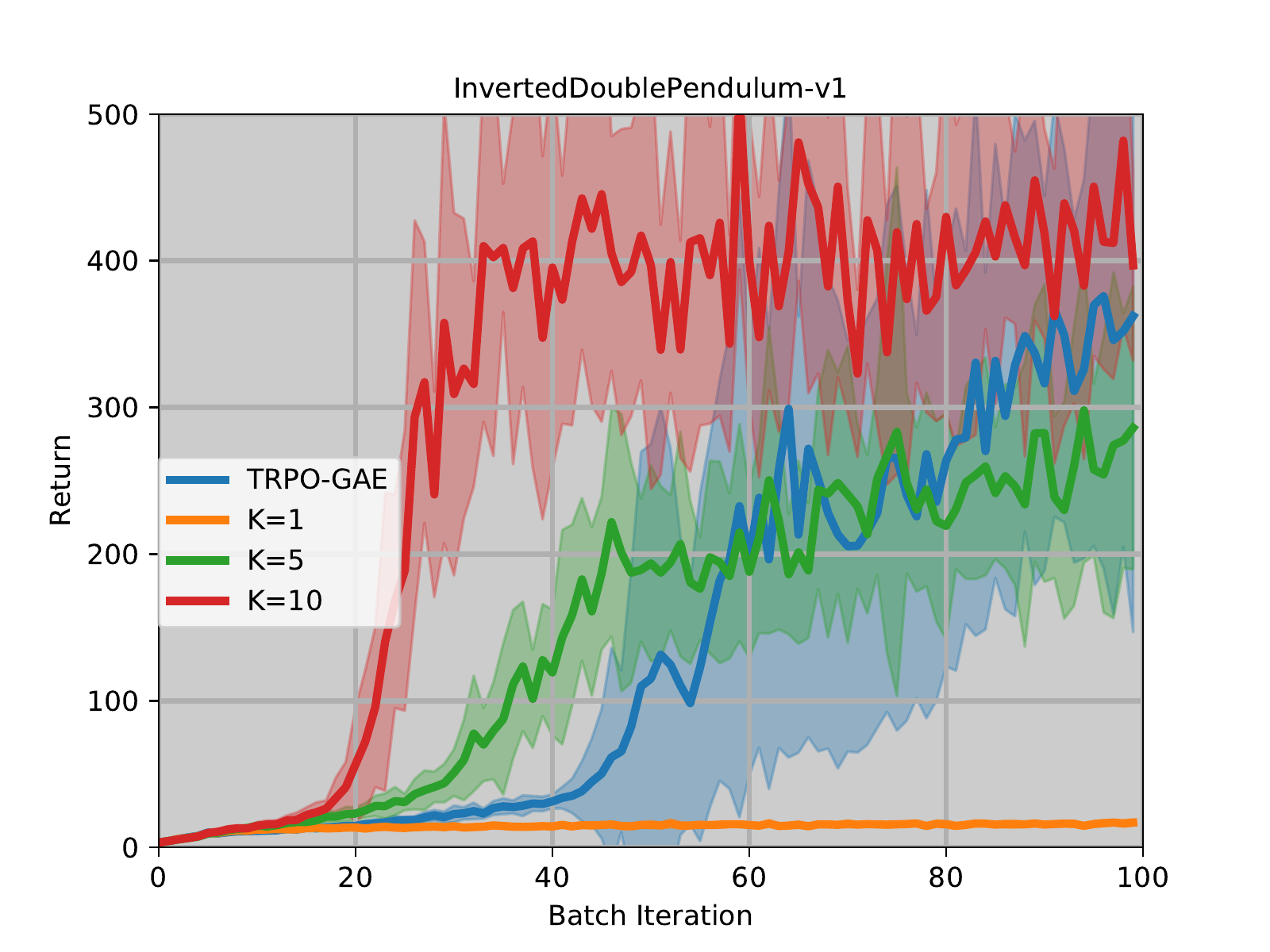}
        \caption{SR-Inverted Double Pendulum (H=1000)}
        \label{fig:inverteddoublependulum}
    \end{subfigure}
	\begin{subfigure}[l]{0.41\textwidth}
        \includegraphics[width=1.11\textwidth,keepaspectratio]{./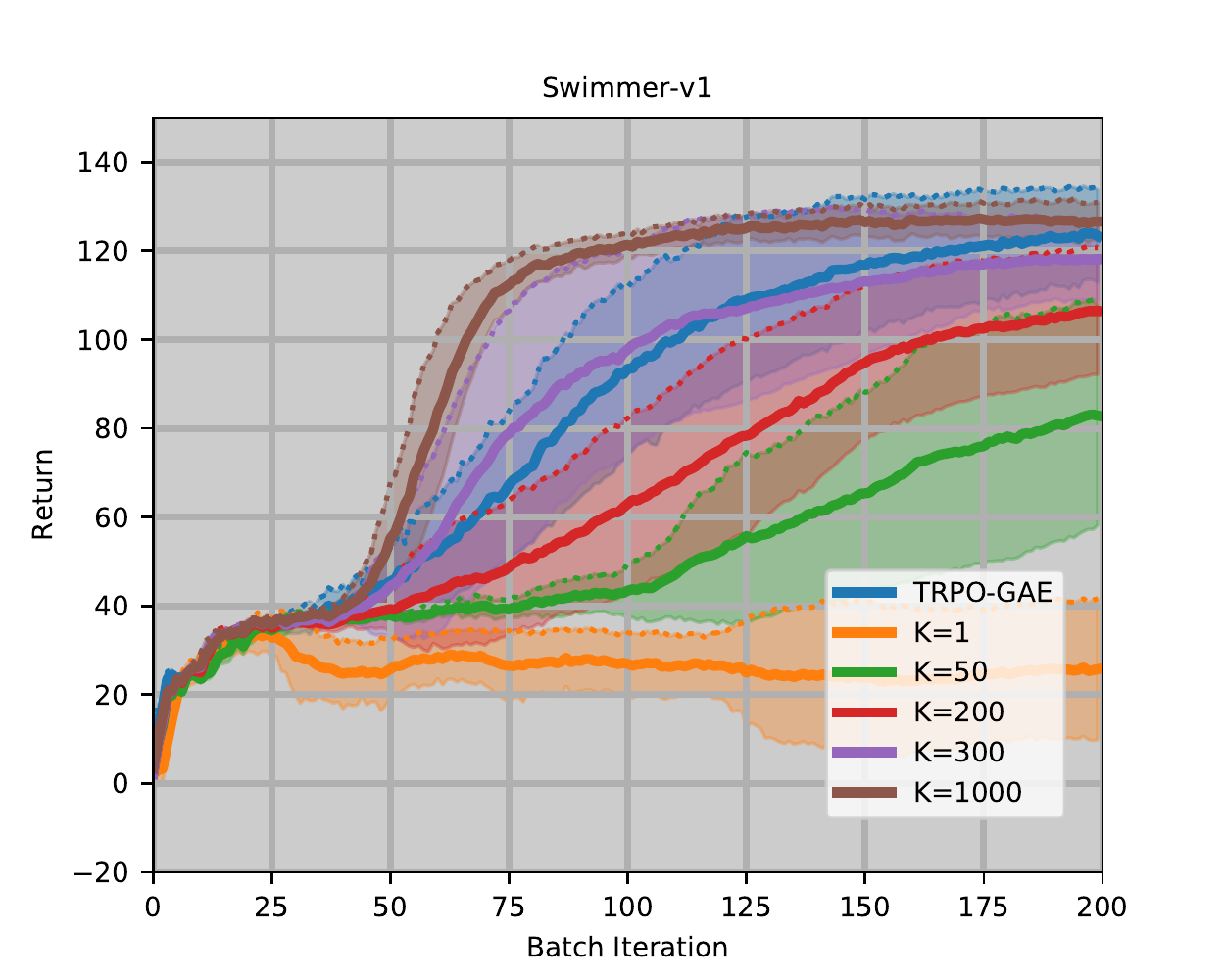}
        \caption{Swimmer (H=1000)}
        \label{fig:swimmer}
    \end{subfigure}
    \begin{subfigure}[l]{0.41\textwidth}
        \includegraphics[width=1.11\textwidth,keepaspectratio]{./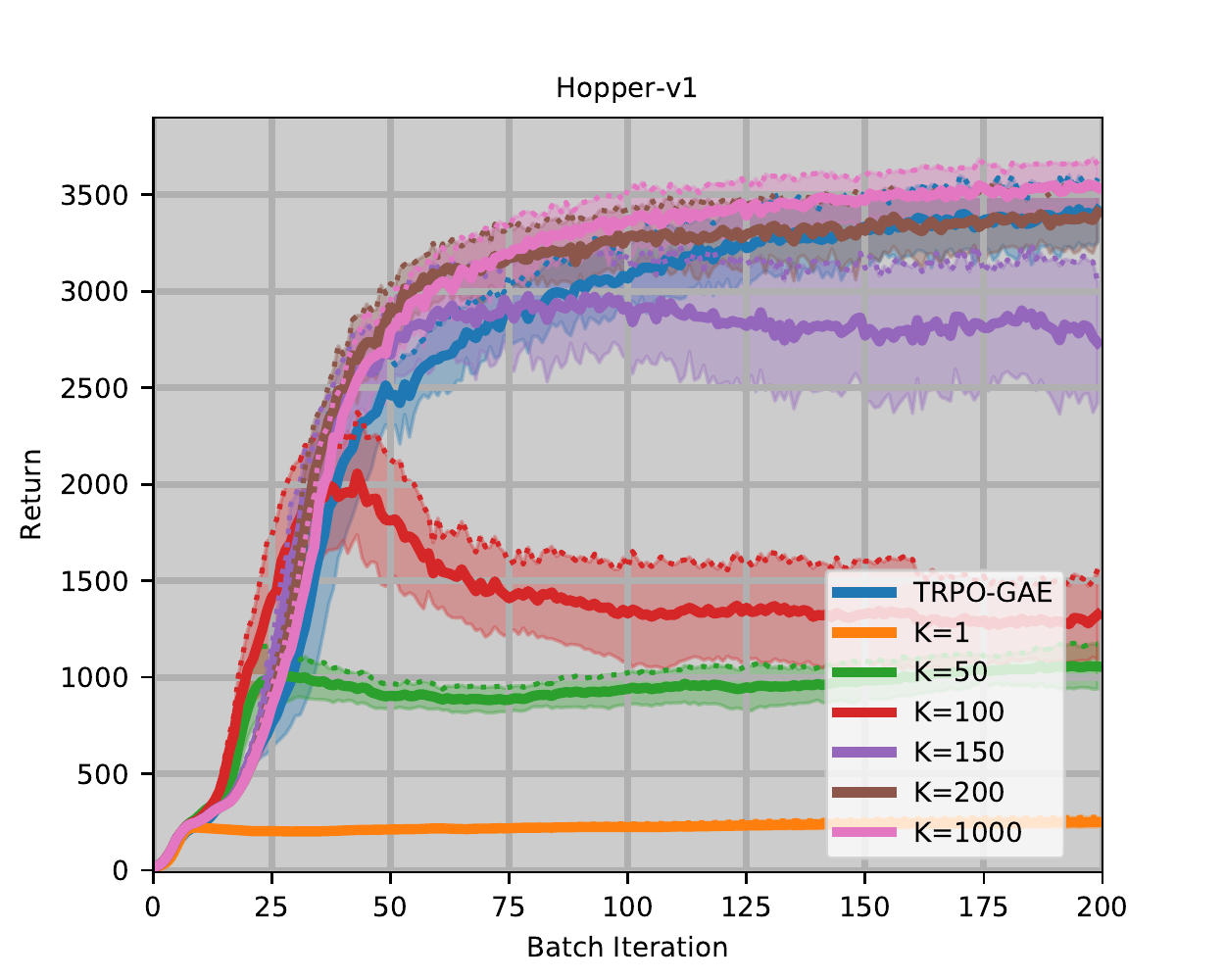}
        \caption{Hopper (H=1000)}
        \label{fig:hopper}
    \end{subfigure}
    \caption{Reward versus batch iterations of THOR with different $k$ and TRPO-GAE (blue) for Sparse Reward (SR) Inverted Pendulum, Sparse Reward Inverted-Double Pendulum, Swimmer and Hopper. Average rewards across 25 runs are shown in solid lines and averages + std are shown in dotted lines. }
    \label{fig:mujoco}
    \vspace{-0.25in}
\end{figure}

We tested our approach on simulators with continuous state and actions from MuJoCo simulators: a modified sparse reward Inverted Pendulum, a modifed sparse reward Inverted Double Pendulum, Hopper and Swimmer. Note that, compared to the sparse reward setting, Hopper and Swimmer do not have reward sparsity and policy gradient methods have shown great results \citep{schulman2015trust,schulman2015high}. Also, due to the much larger and more complex state space and control space compared to the simulations we consider in the previous section, the value function estimator $\hat{V}^e$ is much less accurate in terms of estimating $V^*_{\mathcal{M}_0}$ since the trajectories demonstrated from experts may only cover a very small part of the state and control space.  Fig.~\ref{fig:mujoco} shows the results of our approach. For all simulations, we require $k$ to be around $20\%\sim 30\%$ of the original planning horizon $H$ to achieve good performance. 
\aggrevated ($k=1$) learned very little due to the imperfect value function estimator $\hat{V}^e$. We also tested $k=H$, where we observe that reward shaping with $\hat{V}^e$ gives better performance than TRPO-GAE. This empirical observation is consistent with the observation from \citep{ng1999policy} ( \cite{ng1999policy} used SARSA \citep{Sutton1998}, not policy gradient based methods). This indicates that even when $\hat{V}^e$ is not close to $V^*$, policy gradient methods can still employ the oracle $\hat{V}^e$ just via reward shaping.  

Finally, we also observed that our approach significantly reduces the variance of the performance of the learned polices (e.g., Swimmer in Fig.~\ref{fig:mujoco}(a)) in  \emph{all} experiments, including the sparse reward setting. This is because truncation can significantly reduce the variance from the policy gradient estimation when $k$ is small compared to $H$.

\section{Conclusion}
\label{sec:conclude}
We propose a novel way of combining IL and RL through the idea of cost shaping with an expert oracle. Our theory indicates that cost shaping with the oracle shortens the learner's planning horizon as a function of the accuracy of the oracle compared to the optimal policy's value function. Specifically, when the oracle is the optimal value function, we show that by setting $k=1$ reveals previous imitation learning algorithm \aggrevated. On the other hand, we show that when the oracle is imperfect, using planning horizon $k>1$ can learn a policy that  outperforms a policy that would been learned by \aggrevate and \aggrevated (i.e., $k=1$). With this insight, we propose \thor (Truncated HORizon policy search), a gradient based policy search algorithm that explicitly focusing on minimizing the total cost over a finite planning horizon. Our formulation provides a natural half-way point between IL and RL, and experimentally we demonstrate that with a reasonably accurate oracle, our approach can outperform  RL and IL baselines. 

We believe our high-level idea of shaping the cost with the oracle and then focusing on optimizing a shorter planning horizon is not limited to the practical algorithm we proposed in this work. In fact our idea can be combined with other RL techniques such as Deep Deterministic Policy Gradient (DDPG) \citep{lillicrap2015continuous}, which has an extra potential advantage of storing  extra information from the expert such as the offline demonstrations in its replay buffer (\cite{vevcerik2017leveraging}). Though in our experiments, we simply used some expert's demonstrations to pre-train $\hat{V}^e$ using TD learning, there are other possible ways to learn a more accurate $\hat{V}^e$. For instance, if an expert is available during training \citep{Ross2011_AISTATS},  one can online update $\hat{V}^e$ by querying expert's feedback. 


\section*{acknowledgement}
Wen Sun is supported in part by Office of Naval Research contract N000141512365. The authors also thank Arun Venkatraman and Geoff Gordon for value discussion.

\bibliography{iclr2018_conference}
\bibliographystyle{iclr2018_conference}

\newpage
\appendix

\section{Proof of Theorem~\ref{them:lower_bound}}
\label{sec:proof_lower_bound}
\begin{figure}[h]
  \centering
      \includegraphics[trim={0cm 0.0cm 0 0cm},clip, width=0.6\textwidth]{./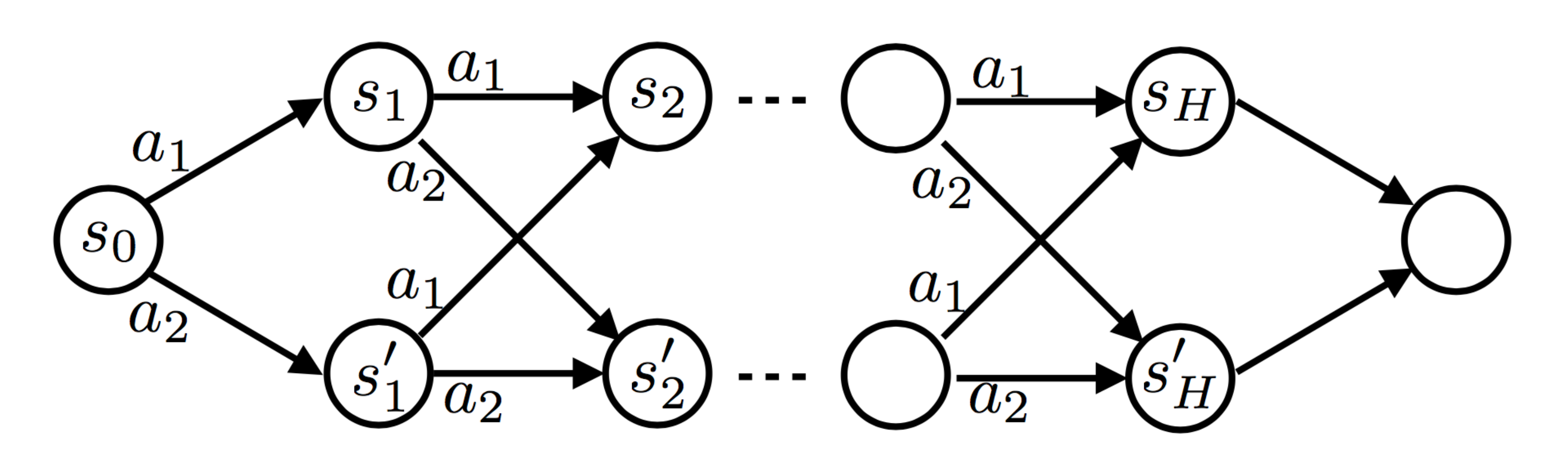}
  \caption{The special MDP we constructed for theorem~\ref{them:lower_bound}}
  \label{fig:special_MDP}
  \vspace{-5pt}
\end{figure}

\begin{proof} 
We prove the theorem by constructing a special MDP shown in Fig~\ref{fig:special_MDP}, where $H = \infty$. The MDP has deterministic transition, $2H+2$ states, and each state has two actions $a_1$ and $a_2$ as shown in Fig.~\ref{fig:special_MDP}. Every episode starts at state $s_0$. For state $s_i$ (states on the top line), we have $c(s_i) = 0$ and for state $s_i'$ (states at the bottom line) we have $c(s_i) = 1$.

It is clear that for any state $s_i$, we have $Q^*(s_i, a_1) = 0$, $Q^*(s_i, a_2) = \gamma$, $Q^*(s'_i, a_1) = 1$ and $Q^*(s'_i,a_2) = 1+\gamma$, for $i\geq 1$. 
Let us assume that we have an oracle $\hat{V}^e$ such that $\hat{V}^e(s_i) = 0.5+\delta$ and $V^e(s'_i) = 0.5 - \delta$, for some positive real number $\delta$. Hence we can see that $|\hat{V}^e(s) - V^*(s)| = 0.5+\delta$, for all $s$.
Denote $\hat{Q}^e(s,a) = c(s,a) + \gamma\mathbb{E}_{s'\sim P_{sa}}[\hat{V}^e(s')]$, we know that $\hat{Q}^e(s_i,a_1) = \gamma(0.5+\delta)$, $\hat{Q}^e(s_i,a_2) = \gamma(0.5-\delta)$, $\hat{Q}^e(s'_i,a_1) = 1 + \gamma(0.5+\delta)$ and $\hat{Q}^e(s'_i,a_2) = 1+\gamma(0.5-\delta)$.

It is clear that the optimal policy $\pi^*$ has cost $J(\pi^*) = 0$. Now let us compute the cost of the induced policy from oracle $\hat{Q}^e$: $\hat{\pi}(s) =\arg\min_{a}\hat{Q}^e(s,a)$. As we can see $\hat{\pi}$ makes a mistake at every state as $\arg\min_a \hat{Q}^e(s,a) \neq \arg\min_a Q^*(s,a)$. Hence we have $J(\hat{\pi}) = \frac{\gamma}{1-\gamma}$. Recall that in our constructed example, we have $\epsilon = 0.5+\delta$. Now let $\delta \to 0^+$  (by $\delta\to 0^+$ we mean $\delta$ approaches to zero from the right side), we have $\epsilon\to 0.5$, hence $J(\hat{\pi}) = \frac{\gamma}{1-\gamma} \to \frac{2\gamma}{1-\gamma}\epsilon$ (due to the fact $2\epsilon\to 1$). Hence we have $J(\hat{\pi}) - J(\pi^*) = \Omega\left(\frac{\gamma}{1-\gamma}\epsilon\right)$ 
\end{proof}



\section{Proof of Theorem~\ref{them:ub}}
\label{sec:theorem_up_proof}
Below we prove Theorem~\ref{them:ub}. 
\begin{proof} [Proof of Theorem~\ref{them:ub}]In this proof, for notation simplicity, we denote $V^{\pi}_{\mathcal{M}_0}$ as $V^{\pi}$ for any $\pi$. Using the definition of value function $V^{\pi}$, for any state $s_1\in\mathcal{S}$ we have:
\begin{align}
&V^{\hat{\pi}^*}(s_1) - V^*(s_1) \nonumber \\
& = \mathbb{E}\Big[\sum_{i=1}^k \gamma^{i-1} c(s_i,a_i) + \gamma^{k}{V}^{\hat{\pi}^*}(s_{k+1})|\hat{\pi}^*\Big] - \mathbb{E}\Big[\sum_{i=1}^k \gamma^{i-1} c(s_i,a_i) + \gamma^{k}{V}^*(s_{k+1})| {\pi}^*\Big] \nonumber\\ 
&= \mathbb{E}\Big[\sum_{i=1}^k \gamma^{i-1} c(s_i,a_i) + \gamma^{k}{V}^{\hat{\pi}^*}(s_{k+1})|\hat{\pi}^*\Big] - \mathbb{E}\Big[\sum_{i=1}^k \gamma^{i-1} c(s_i,a_i) + \gamma^{k}{V}^*(s_{k+1})| \hat{\pi}^*\Big] \nonumber\\ 
& \;\;\;\;\; + \mathbb{E}\Big[\sum_{i=1}^k \gamma^{i-1} c(s_i,a_i) + \gamma^{k}{V}^*(s_{k+1})| \hat{\pi}^*\Big] - \mathbb{E}\Big[\sum_{i=1}^k \gamma^{i-1} c(s_i,a_i) + \gamma^{k}{V}^*(s_{k+1})| {\pi}^*\Big] \nonumber\\
& = \gamma^k\mathbb{E}\left[ V^{\hat{\pi}^*}(s_{k+1}) - V^*(s_{k+1})\right] \nonumber\\
& \;\;\;\;\; + \mathbb{E}\Big[\sum_{i=1}^k \gamma^{i-1} c(s_i,a_i) + \gamma^{k}{V}^*(s_{k+1})| \hat{\pi}^*\Big] - \mathbb{E}\Big[\sum_{i=1}^k \gamma^{i-1} c(s_i,a_i) + \gamma^{k}{V}^*(s_{k+1})| {\pi}^*\Big]
\label{eq:step_1} \nonumber\\
\end{align} Using the fact that $\|V^*(s) - \hat{V}^e(s) \| \leq \epsilon$, we have that:
\begin{align}
\mathbb{E}\Big[\sum_{i=1}^k \gamma^{i-1} c(s_i,a_i) + \gamma^{k}{V}^*(s_{k+1})| \hat{\pi}^*\Big] \leq \mathbb{E}\Big[\sum_{i=1}^k \gamma^{i-1} c(s_i,a_i) + \gamma^{k}\hat{V}^e(s_{k+1})| \hat{\pi}^*\Big] + \gamma^k \epsilon, \nonumber\\
\mathbb{E}\Big[\sum_{i=1}^k \gamma^{i-1} c(s_i,a_i) + \gamma^{k}{V}^*(s_{k+1})| {\pi}^*\Big]  \geq \mathbb{E}\Big[\sum_{i=1}^k \gamma^{i-1} c(s_i,a_i) + \gamma^{k}\hat{V}^e(s_{k+1})| {\pi}^*\Big] - \gamma^k\epsilon. \nonumber
\end{align} Substitute the above two inequality into Eq.~\ref{eq:step_1},  we have:
\begin{align}
&V^{\hat{\pi}^*}(s_1) - V^*(s_1) \leq \gamma^k\mathbb{E}\left[ V^{\hat{\pi}^*}(s_{k+1}) - V^*(s_{k+1})\right] + 2\gamma^k\epsilon \nonumber\\
& \;\;\;\;\;\; + \mathbb{E}\Big[\sum_{i=1}^k \gamma^{i-1} c(s_i,a_i) + \gamma^{k}\hat{V}^e(s_{k+1})| \hat{\pi}^*\Big] - \mathbb{E}\Big[\sum_{i=1}^k \gamma^{i-1} c(s_i,a_i) + \gamma^{k}\hat{V}^e(s_{k+1})| {\pi}^*\Big] \nonumber\\
& \leq \gamma^k\mathbb{E}\left[ V^{\hat{\pi}^*}(s_{k+1}) - V^*(s_{k+1})\right] + 2\gamma^k\epsilon \nonumber\\
& \;\;\;\;\;\; + \mathbb{E}\Big[\sum_{i=1}^k \gamma^{i-1} c(s_i,a_i) + \gamma^{k}\hat{V}^e(s_{k+1})| \hat{\pi}^*\Big] - \mathbb{E}\Big[\sum_{i=1}^k \gamma^{i-1} c(s_i,a_i) + \gamma^{k}\hat{V}^e(s_{k+1})| \hat{\pi}^*\Big] \nonumber\\ 
& = \gamma^k\mathbb{E}\left[ V^{\hat{\pi}^*}(s_{k+1}) - V^*(s_{k+1})\right] + 2\gamma^k\epsilon,
\end{align} where the second inequality comes from the fact that $\hat{\pi}^*$ is the minimizer from Eq.~\ref{eq:ideal}. Recursively expand $V^{\pi^*}(s_{k+1}) - V^*(s_{k+1})$ using the above procedure, we can get:
\begin{align}
V^{\hat{\pi}^*}(s_1) - V^*(s_1) \leq  2\gamma^k\epsilon(1+\gamma^k + \gamma^{2k} + ...) \leq \frac{2\gamma^k}{1-\gamma^k} \epsilon = O(\frac{\gamma^k}{1-\gamma^k}\epsilon).
\end{align}
Since the above inequality holds for any $s_1\in\mathcal{S}$, for any initial distribution $v$ over state space $\mathcal{S}$, we will have $J(\hat{\pi}^*) - J(\pi^*) \leq O(\frac{\gamma^k}{1-\gamma^k}\epsilon)$. 
\end{proof}

\end{document}